\documentclass[11pt]{article}
\usepackage[margin=1in]{geometry}
\usepackage{amsmath,amssymb,amsthm,mathtools,booktabs,array,hyperref,graphicx}
\hypersetup{hidelinks}
\usepackage{enumitem}
\usepackage[T1]{fontenc}
\usepackage[utf8]{inputenc}

\newtheorem{theorem}{Theorem}
\newtheorem{lemma}{Lemma}
\newtheorem{proposition}{Proposition}

\theoremstyle{remark}
\newtheorem{remark}{Remark}

\newcommand{\R}{\mathbb R}
\newcommand{\E}{\mathbb E}
\newcommand{\Prob}{\mathbb P}
\newcommand{\co}{\operatorname{co}}
\newcommand{\dist}{\operatorname{dist}}
\newcommand{\interior}{\operatorname{int}}
\newcommand{\one}{\mathbf 1}
\DeclareMathOperator{\diag}{diag}
\DeclareMathOperator{\sgn}{sgn}

\title{Scalar-Stepsize Nonuniform Monte Carlo Optimistic Policy Iteration: A Certified Counterexample}
\author{Yuanlong Chen}
\date{}

\begin{document}
\maketitle

\begin{abstract}
Tsitsiklis~\cite{Tsitsiklis2002} proved convergence of Monte Carlo optimistic policy iteration under a uniform update structure and identified nonuniform update frequencies as a delicate obstruction.  We give a certified
negative answer for the natural scalar-stepsize, unnormalized asynchronous state-value recursion with fixed nonuniform state-selection probabilities. In a three-state, two-action discounted MDP, the nonuniform update frequencies induce a diagonally scaled greedy-policy mean field with a certified nonconstant
attracting hybrid periodic orbit.  With a bounded unbiased geometric-horizon estimator and Robbins--Monro stepsizes, the original stochastic recursion remains trapped near the cycle with positive probability and therefore fails to converge. The example pinpoints a geometric obstruction: uniform sampling gives radial residual contraction, whereas scalar nonuniform sampling anisotropically distorts the residual dynamics and can generate switched attracting cycles.
\end{abstract}

\section{Introduction}\label{sec:introduction}

Optimistic policy iteration (OPI), also known in related Monte Carlo forms as Monte Carlo exploring starts, alternates between partial policy evaluation and greedy policy improvement; see Sutton and Barto \cite{SuttonBarto2018} for the standard Monte Carlo control formulation.  In a finite discounted Markov decision process (MDP), the fully synchronous dynamic-programming counterpart is well behaved, but the Monte Carlo optimistic variant is more delicate because policy evaluation is incomplete and the greedy policy can change while the value estimates are still moving.

A foundational convergence result of Tsitsiklis \cite{Tsitsiklis2002} established convergence for a special discounted Monte Carlo OPI scheme under a uniform update structure.  The same analysis also shows why nonuniform update frequencies are delicate: the scalar commutativity used in the residual proof is lost unless the sampling imbalance is removed by an appropriate component-dependent normalization.  This paper addresses the unnormalized scalar-stepsize version of that nonuniform regime for the asynchronous state-value recursion
\begin{equation}\label{eq:intro-recursion}
    J_{t+1}=J_t+\gamma_t e_{I_t}\bigl(R_t-J_t(I_t)\bigr),
\end{equation}
where only one coordinate is updated at each iteration, $\Prob(I_t=i)=q_i$, and the stepsize $\gamma_t$ is a single scalar shared by all coordinates.

The answer is negative.  We construct a three-state, two-action discounted MDP, together with a nonuniform update distribution \(q\) for which the recursion \eqref{eq:intro-recursion} fails to converge with positive probability.  The mechanism is visible already at the mean-field level.  If $\mu(J)$ is the greedy policy at $J$, the expected drift away from policy ties is
\begin{equation}\label{eq:intro-drift}
    \dot J=D\bigl(J^{\mu(J)}-J\bigr),
    \qquad D=\diag(q_1,\ldots,q_n).
\end{equation}
When $D$ is a scalar multiple of the identity, each fixed-policy region has a straight-line relaxation structure in suitable residual variables.  When $D$ is nonuniform, this scalar structure is replaced by a diagonal distortion.  In the example below, that distortion creates an attracting switched cycle of the greedy-policy mean field.

The main mathematical difficulty is that $J\mapsto \mu(J)$ is discontinuous.  The proof therefore treats the limiting dynamics as a differential inclusion on policy-tie surfaces, certifies that the candidate orbit crosses all switching surfaces one-way, and verifies a positive-radius Poincare contraction on an actual two-dimensional return section.  The stochastic recursion is then shown to stay near this attracting cycle on a positive-probability small-noise event.

\paragraph{Contributions.}
This paper gives a certified negative answer to the nonuniform-state-selection question for the natural scalar-stepsize, unnormalized, state-value version of asynchronous Monte Carlo OPI. We construct a three-state, two-action discounted MDP and a fixed nonuniform update distribution for which the original stochastic recursion fails to converge with positive probability.  The example identifies the failure mechanism: scalar nonuniform sampling replaces the uniform scalar relaxation by a diagonally scaled greedy-policy mean field, and this distortion can create a stable nonoptimal cycle rather than convergence to the Bellman fixed point.  The computer-assisted orbit certificate and the martingale trapping argument are used to make this counterexample fully rigorous.

\paragraph{Organization.}
Section~\ref{sec:literature} reviews the relevant literature.  Section~\ref{sec:algorithm} defines the recursion and the limiting mean field.  Section~\ref{sec:main-result} states the certified counterexample and the main nonconvergence theorem.  Section~\ref{sec:certificate-summary} summarizes the finite certificates, while the numerical details are moved to Appendices~\ref{app:data} and~\ref{app:certificates}.  Section~\ref{sec:stochastic-lift} proves the stochastic positive-probability conclusion, with the detailed hybrid perturbation argument placed in Appendix~\ref{app:hybrid-proof}.  Section~\ref{sec:optimality-mechanism} identifies the optimal value, mechanism, and separation from the certified cycle.  Section~\ref{sec:discussion} discusses the convergence/nonconvergence boundary and open problems.  Appendix~\ref{app:scripts} describes the support verification scripts.

\section{Literature review and relation to prior work}\label{sec:literature}

\paragraph{Optimistic policy iteration and Monte Carlo exploring starts.}
The starting point is Tsitsiklis \cite{Tsitsiklis2002}, which proves convergence of a special Monte Carlo optimistic policy iteration algorithm in the discounted finite-state setting.  The proof relies on a uniform update structure.  In residual variables, uniform sampling makes the limiting fixed-policy dynamics essentially scalar, and this scalar structure is central to the argument.  Tsitsiklis also explains that nonuniform initial-state sampling can be normalized by component-dependent empirical stepsizes, but that the unnormalized scalar update loses the commutativity behind the proof.  The present work gives a negative answer for this scalar-stepsize asynchronous state-value formulation.

Several later papers study related Monte Carlo OPI or MCES convergence questions. Chen \cite{Chen2018} and Liu \cite{Liu2021} investigate convergence in stochastic shortest path settings.  Liu's analysis also emphasizes that, in nonuniform settings, component-dependent normalization is the natural interpretation of Monte Carlo exploring starts.  Other positive results impose structural or parameter restrictions, such as feed-forward or acyclic transient structure \cite{WangYuanShaoRoss2022,LubarsWinnickiLivesaySrikant2021} or a small discount factor for the first-visit Monte Carlo algorithm \cite{DelattreFournier2026}.  More recently, Oliviers and Vinnicombe \cite{OliviersVinnicombe2026} prove a positive result beyond uniform state-action
updates for an initial-visit MC-O-PI scheme that retains uniformity over actions within each state. Their result concerns an action-value MC-O-PI formulation with within-state action-uniform updates, whereas the recursion studied here is the scalar state-value update \eqref{eq:intro-recursion}.  The distinction is essential: their within-state action-uniform normalization removes the unnormalized scalar nonuniform regime.  Their analysis also emphasizes that the classical commutativity argument no longer applies when states are updated at different frequencies.  The present counterexample exhibits the corresponding failure mode in a state-value scalar-stepsize setting: nonuniform state-coordinate frequencies without component-dependent normalization create a diagonally distorted greedy-policy mean field with a certified attracting cycle.

\paragraph{Relation to approximate-policy-iteration oscillations.}
The cycle certified here is different from residual-method and approximate-policy-iteration pathologies that can occur with function approximation, projection, or natural actor-critic style updates \cite{Baird1995,Wagner2011,Wagner2013}.  No function approximation, projection, or sampling approximation of the MDP model is used in the deterministic certificate.  The instability is caused instead by the diagonal update-frequency factor in an exact tabular mean field.

\paragraph{Asynchronous stochastic approximation.}
Asynchronous stochastic approximation provides the natural language for update-frequency effects.  Borkar \cite{Borkar1998} studies asynchronous stochastic approximations and shows how limiting differential equations are modified by relative update frequencies.  In the present setting, this modification appears as the diagonal matrix $D=\diag(q_1,\ldots,q_n)$ in \eqref{eq:intro-drift}.  The key point is that this diagonal factor is harmless in some settings but can be dynamically destructive in scalar-stepsize nonuniform OPI.

\paragraph{Differential inclusions and positive-probability attraction.}
Because greedy policies can be nonunique on tie surfaces, the closed limiting object is a differential inclusion rather than a globally Lipschitz ODE.  The general stochastic-approximation/differential-inclusion framework is developed by Benaim, Hofbauer, and Sorin \cite{BenaimHofbauerSorin2005}; Perkins and Leslie \cite{PerkinsLeslie2012} extend this perspective to asynchronous stochastic approximation with set-valued mean fields.  Faure and Roth \cite{FaureRoth2010} prove a general positive-probability convergence-to-attractor theorem for stochastic approximations governed by set-valued dynamical systems.  Our stochastic lift is consistent with this literature.  The main new work here is not a new abstract stochastic-approximation theorem, but rather the certified construction of a non-equilibrium attractor in a concrete OPI mean field, together with a local trapping proof for the original Monte Carlo recursion.

\section{Scalar-Stepsize Nonuniform MC-OPI and Its Mean Field}\label{sec:algorithm}


Let
\[
\mathcal M=(S,A,P,g,\alpha)
\]
be a finite discounted Markov decision process.  Here
\(S=\{1,\ldots,n\}\) is the finite state space, \(A\) is the finite action
space, \(P(i,a,j)\) is the probability of moving from state \(i\) to state
\(j\) after choosing action \(a\), \(g(i,a)\) is the one-stage cost, and
\(0<\alpha<1\) is the discount factor.  A deterministic stationary policy is
a map \(\mu:S\to A\).  Under policy \(\mu\), the controlled Markov chain evolves
according to \(P_\mu(i,j)=P(i,\mu(i),j)\), and its discounted cost-to-go is
\[
J^\mu(i)
=
\E_i^\mu\left[\sum_{t=0}^{\infty}
\alpha^t g(S_t,\mu(S_t))\right].
\]
Since the model is finite and discounted, \(J^\mu\) is the unique solution of
the linear Bellman equation
\[
J^\mu = g_\mu+\alpha P_\mu J^\mu,
\]
or equivalently
\[
J^\mu=(I-\alpha P_\mu)^{-1}g_\mu .
\]

For a deterministic policy \(\mu\), write
\[
(T_\mu J)(i)=g(i,\mu(i))+\alpha\sum_j P(i,\mu(i),j)J(j),
\]
and define the optimal Bellman operator by
\[
(TJ)(i)=\min_{a\in A}\left\{g(i,a)+\alpha\sum_jP(i,a,j)J(j)\right\}.
\]
The fixed point of \(T_\mu\) is \(J^\mu\), while the unique fixed point of
\(T\) is the optimal value \(J^*\).  A policy \(\mu\) is greedy for \(J\) if
\[
T_\mu J=TJ.
\]
Let \(G(J)\) be the finite set of greedy policies at \(J\). Exact policy iteration alternates between greedy improvement and complete policy
evaluation: from a value estimate \(J\), choose a greedy policy
\(\mu\in G(J)\), compute its exact value \(J^\mu\), and then improve again.
Optimistic policy iteration (OPI) does not wait for complete evaluation before
improving the policy.  Instead, after choosing a greedy policy from the current
estimate, it takes only a partial evaluation step toward \(J^\mu\) and then
recomputes the greedy policy at the next iterate.  Thus policy improvement and
policy evaluation are interleaved.

The scalar asynchronous Monte Carlo OPI recursion is
\begin{equation}\label{eq:async-recursion}
    J_{t+1}=J_t+\gamma_t e_{I_t}\bigl(R_t-J_t(I_t)\bigr),
\end{equation}
where $I_t$ is the single coordinate updated at time $t$, $\Prob(I_t=i)=q_i$, and $\gamma_t$ is deterministic and scalar.  We fix a measurable tie-breaking rule $\tau$ for greedy policies, for instance lexicographic tie-breaking, and set $\mu_t=\tau(J_t)\in G(J_t)$.

Throughout the stochastic recursion, let $(\mathcal F_t)$ be the natural filtration generated by the initial condition and all samples before the $t$th update. Thus $J_t$ and $\mu_t$ are $\mathcal F_t$-measurable. Conditional on $\mathcal F_t$, the coordinate $I_t$ is sampled with the fixed law $q$ and is independent of the geometric horizon and Markov-chain randomness used to generate the return. Conditional on $(\mathcal F_t,I_t=i)$, the return $R_t$ is generated from starting state $i$ under policy $\mu_t$. In the stochastic theorem we use
\[
    \gamma_t=\frac{1}{t+N}
\]
with $N$ sufficiently large.

For the counterexample, we choose $n=3$, $A=\{0,1\}$, and
\[
    q=(0.02,0.18,0.80),
    \qquad D=\diag(q_1,q_2,q_3).
\]
The full cost and transition data are listed in Appendix~\ref{app:data}.  For $J\in\R^3$, define
\[
    Q_i(a;J)=g(i,a)+\alpha P(i,a)J
\]
and
\[
    d_i(J)=Q_i(1;J)-Q_i(0;J).
\]
Since costs are minimized,
\[
    d_i(J)>0 \Longleftrightarrow \text{action }0\text{ is uniquely greedy at state }i,
\]
and
\[
    d_i(J)<0 \Longleftrightarrow \text{action }1\text{ is uniquely greedy at state }i.
\]

\subsection{Sampling model}

We use a bounded unbiased geometric-horizon Monte Carlo estimator.  Given a starting state $i$ and a policy $\mu$, sample
\[
    T\sim\operatorname{Geom}(1-\alpha)-1,
    \qquad \Prob(T=m)=(1-\alpha)\alpha^m,
    \quad m=0,1,2,\ldots .
\]
Run the Markov chain under $\mu$ for $T$ transitions from $S_0=i$ and define
\begin{equation}\label{eq:geometric-estimator}
    R=\frac{g(S_T,\mu(S_T))}{1-\alpha}.
\end{equation}

\begin{lemma}[Unbiasedness and boundedness]\label{lem:geom-estimator}
For every deterministic policy $\mu$ and starting state $i$,
\[
    \E[R\mid S_0=i,\mu]=J^\mu(i).
\]
Moreover, if $G_{\max}=\max_{i,a}|g(i,a)|$, then
\[
    |R|\le \frac{G_{\max}}{1-\alpha}\quad\text{almost surely}.
\]
\end{lemma}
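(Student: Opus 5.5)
The plan is to condition on the horizon $T$ and use the independence of $T$ from the Markov chain under $\mu$, then compare the resulting series with the series defining $J^\mu$. Write $g_\mu(j)=g(j,\mu(j))$. On the event $\{T=m\}$, the state $S_T$ equals $S_m$, the state after $m$ transitions of the chain $P_\mu$ started at $i$. This gives
\[
\E\bigl[g_\mu(S_T)\mid T=m,S_0=i\bigr]=\E_i^\mu\bigl[g_\mu(S_m)\bigr]=(P_\mu^m g_\mu)(i).
\]

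Next, average over the geometric law $\Prob(T=m)=(1-\alpha)\alpha^m$. This yields
\[
\E[R\mid S_0=i,\mu]=\frac{1}{1-\alpha}\sum_{m\ge0}(1-\alpha)\alpha^m (P_\mu^m g_\mu)(i)=\sum_{m\ge0}\alpha^m(P_\mu^m g_\mu)(i).
\]
The last expression is the Neumann series for $\bigl((I-\alpha P_\mu)^{-1}g_\mu\bigr)(i)=J^\mu(i)$; equivalently, it equals the expected discounted sum in the definition of $J^\mu$.

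The one point needing care is the interchange of sum and expectation. Here it is harmless: $|g_\mu(S_m)|\le G_{\max}$ and $\sum_m(1-\alpha)\alpha^m<\infty$, so dominated convergence (or Fubini for the product of the law of $T$ and the chain's path law) applies. The same bound, together with $R=g(S_T,\mu(S_T))/(1-\alpha)$, gives $|R|\le G_{\max}/(1-\alpha)$ almost surely. The independence of $T$ from the transitions is exactly the sampling model's specification, so no real obstacle is expected.
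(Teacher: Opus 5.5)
Your proof is correct and follows the paper's argument: condition on $T=m$ to get $(P_\mu^m g_\mu)(i)$, average against $(1-\alpha)\alpha^m$, recognize the Neumann series for $(I-\alpha P_\mu)^{-1}g_\mu$, and read the bound directly off the definition of $R$. Your explicit dominated-convergence justification for exchanging sum and expectation is a small piece of rigor that the paper leaves implicit.
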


\begin{proof}
Conditioning on the random horizon and terminal state gives
\[
\begin{aligned}
\E[R\mid S_0=i,\mu]
&=\sum_{m=0}^{\infty}(1-\alpha)\alpha^m
   \sum_{j=1}^3(P_\mu^m)(i,j)\frac{g(j,\mu(j))}{1-\alpha} \\
&=\sum_{m=0}^{\infty}\alpha^m(P_\mu^m g_\mu)(i)
=\bigl[(I-\alpha P_\mu)^{-1}g_\mu\bigr](i)
=J^\mu(i).
\end{aligned}
\]
The bound follows immediately from \eqref{eq:geometric-estimator}.
\end{proof}

Because $\mu_t$ is $\mathcal F_t$-measurable and the coordinate and return sampling are conditionally independent as specified above, Lemma~\ref{lem:geom-estimator} gives
\[
    \E[R_t\mid\mathcal F_t,I_t=i]=J^{\mu_t}(i).
\]
Therefore
\begin{equation}\label{eq:mean-drift}
\begin{aligned}
\E[J_{t+1}-J_t\mid\mathcal F_t]
&=\gamma_t\sum_{i=1}^3q_i e_i\bigl(J^{\mu_t}(i)-J_t(i)\bigr)\\
&=\gamma_t D\bigl(J^{\mu_t}-J_t\bigr).
\end{aligned}
\end{equation}
Away from tie surfaces, $G(J)$ is a singleton and the mean ODE is
\begin{equation}\label{eq:mean-ode}
    \dot J=D\bigl(J^{\mu(J)}-J\bigr).
\end{equation}
At ties the closed mean field is the differential inclusion
\begin{equation}\label{eq:DI}
    \dot J\in H(J),
    \qquad
    H(J)=D\,\co\{J^\mu-J:\mu\in G(J)\}.
\end{equation}
The map $H$ is upper semicontinuous with nonempty compact convex values because there are finitely many policies.

Inside a fixed-policy region, $\mu(J)=\mu$, the vector field is affine:
\[
    f_\mu(J)=D(J^\mu-J).
\]
For the concrete update frequencies $q=0.02(1,9,40)$, the exact flow can be written by putting $z=e^{-0.02t}$:
\begin{equation}\label{eq:Phi}
    \Phi_\mu(x,z)=J^\mu+(x-J^\mu)\odot(z,z^9,z^{40}).
\end{equation}

\section{Main result}\label{sec:main-result}

The counterexample follows the policy sequence
\begin{equation}\label{eq:policy-sequence}
(0,1,1)\to(0,0,1)\to(1,0,1)\to(1,0,0)\to(1,1,0)\to(1,1,1)\to(0,1,1).
\end{equation}
Let
\[
\mu_0=(0,1,1),\quad
\mu_1=(0,0,1),\quad
\mu_2=(1,0,1),
\]
\[
\mu_3=(1,0,0),\quad
\mu_4=(1,1,0),\quad
\mu_5=(1,1,1).
\]
The finite certificates in Appendix~\ref{app:certificates} verify the following proposition.

\begin{proposition}[Certified attracting hybrid cycle]\label{prop:certified-cycle}
For the MDP in Appendix~\ref{app:data} and the update distribution $q=(0.02,0.18,0.80)$, the hybrid ODE \eqref{eq:mean-ode} has a nonconstant locally attracting periodic orbit $C$ following the policy sequence \eqref{eq:policy-sequence}.  Moreover, $C$ is a strict local attractor for the local Filippov differential inclusion \eqref{eq:DI}.
\end{proposition}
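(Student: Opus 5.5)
The plan is to build the cycle as a composition of exact affine flows and certify it with interval arithmetic. By \eqref{eq:Phi}, inside the region where $\mu_k$ is uniquely greedy the flow is explicit: $\Phi_{\mu_k}(x,z)=J^{\mu_k}+(x-J^{\mu_k})\odot(z,z^9,z^{40})$. Each transition in \eqref{eq:policy-sequence} flips exactly one coordinate of the policy, so consecutive regions are separated by one switching surface $\Sigma_k=\{d_{i_k}(J)=0\}$, a hyperplane since $d_i$ is affine in $J$. The hitting time of $\Sigma_k$ from $x$ solves the scalar equation $d_{i_k}(\Phi_{\mu_k}(x,z))=0$ in $z\in(0,1)$, an exponential polynomial with three terms. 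Composing the six section-to-section maps gives a Poincar\'e return map $\Pi:\Sigma_0\supset U\to\Sigma_0$ on a two-dimensional section, which I would parametrize by two affine coordinates.

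\emph{Step 1 (periodic point).} Compute an approximate fixed point $x^\ast\in\Sigma_0$ of $\Pi$ numerically. Then certify a true fixed point in a small box $B\subset\Sigma_0$ by an interval Newton/Krawczyk test on $x\mapsto\Pi(x)-x$. This needs interval enclosures of the hitting times, obtained by interval Newton on each scalar equation together with a check that $\partial_z d_{i_k}\neq 0$ on the enclosure. \emph{Step 2 (admissibility).} Along each arc, certify with interval bounds over the time interval that the other two $d_j$ keep their signs, so that $\mu_k$ stays uniquely greedy, and that the active $d_{i_k}$ has no earlier zero. \emph{Step 3 (transversality and one-way crossing).} At each switching point, verify that $\nabla d_{i_k}\cdot f_{\mu_k}$ and $\nabla d_{i_k}\cdot f_{\mu_{k+1}}$ have the same strict sign. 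This says both adjacent vector fields point across $\Sigma_k$ in the same direction, so the surface is crossing rather than sliding, and the Filippov convex combinations in \eqref{eq:DI} cannot be tangent to it. Nonconstancy is immediate, since the orbit visits six distinct regions.

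\emph{Step 4 (contraction).} Bound the derivative of $\Pi$ on all of $B$. By the chain rule with saltation matrices at each crossing (or by interval enclosure of the derivative of the composed map), show $\sup_B\|D\Pi\|<1$ in a suitable norm, and check $\Pi(B)\subset\operatorname{int}B$. This gives a positive-radius contraction and hence local exponential attraction of the orbit $C$ for \eqref{eq:mean-ode}. \emph{Step 5 (strict attractor for the inclusion).} Near $C$, away from the switching points, $H$ is single-valued. Near each switching point, transversality holds uniformly on a neighborhood because the sign inequalities are strict and robust, so every Filippov solution crosses each $\Sigma_k$ exactly once. Solutions of \eqref{eq:DI} therefore coincide with classical hybrid solutions in a tubular neighborhood of $C$. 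Neighborhoods that solutions cannot leave then come from sublevel sets of $\|\cdot\|$ transported along the flow, and attraction follows from the Poincar\'e contraction. These together give strict local attractivity.

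I expect the main obstacle to be Step 4 combined with Step 5's uniformity. Because of the stiff rates $(1,9,40)$, derivatives of the composed map can be large or badly conditioned before they contract. Interval enclosures of hitting times over a whole box, rather than at a point, may also blow up through wrapping effects. I would first try small boxes, with a norm adapted to the approximate eigenvectors of $D\Pi(x^\ast)$, and subdivide the box if the bound fails.
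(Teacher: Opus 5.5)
Your proposal is correct and is essentially the paper's proof. Your Steps 1--5 correspond to certificates C1 (orbit existence), C2 (guard signs along each segment), C3 (same-sign incoming/outgoing normal velocities, hence no Filippov sliding) and C4+ (positive-radius Poincar\'e contraction on a two-dimensional section, with image strictly inside the box), followed by the same no-sliding transfer from the hybrid flow to the Filippov inclusion. The only differences are implementation choices: the paper runs Krawczyk on the nine-dimensional multiple-shooting event system $G(y)=0$, which is polynomial over $\mathbb Q$, rather than on $\Pi(x)-x$; it uses the $\infty$-norm on a $4\times4$-subdivided section box of radius $10^{-3}$; and it re-certifies the first-hit itinerary and same-sign crossings on that whole box, rather than only by robustness near $C$, so that the radius and margin can be reused in the stochastic lift.
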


\begin{proof}
The proof is certified by C1--C4+ in Appendix~\ref{app:certificates}.  Certificate C1 proves existence and local uniqueness of an exact zero of the event equations defining the periodic orbit.  Certificate C2 proves that, along each open segment, the intended greedy policy is unique and follows \eqref{eq:policy-sequence}.  Certificate C3 proves one-way transversality at every switching surface and shows that the incoming and outgoing vector fields cross with the same nonzero normal sign; hence the Filippov convexification creates no local sliding branch near the orbit.  Certificate C4+ verifies a positive-radius Poincare map on a two-dimensional return section with Lipschitz constant strictly less than one and image strictly inside the section box.  The contraction mapping theorem and standard Poincare-map stability then imply that $C$ is a strict local attractor for the hybrid flow, and the C3 no-sliding conclusion transfers the attraction statement to the local differential inclusion \eqref{eq:DI}.
\end{proof}

\begin{theorem}[Positive-probability nonconvergence]\label{thm:positive-prob-nonconv}
For the MDP in Appendix~\ref{app:data}, there exists $N<\infty$ and an initial condition in the local basin of the certified cycle $C$ such that the original scalar-stepsize asynchronous Monte Carlo recursion \eqref{eq:async-recursion}, using the bounded unbiased estimator \eqref{eq:geometric-estimator} and stepsizes
\[
    \gamma_t=\frac{1}{t+N},
\]
fails to converge with positive probability.
\end{theorem}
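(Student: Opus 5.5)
We use Proposition~\ref{prop:certified-cycle} as a black box and combine it with a local small-noise trapping argument for the recursion \eqref{eq:async-recursion}. Decompose each step as
\[
J_{t+1}=J_t+\gamma_t D\bigl(J^{\mu_t}-J_t\bigr)+\gamma_t M_{t+1},
\qquad M_{t+1}=e_{I_t}\bigl(R_t-J_t(I_t)\bigr)-D\bigl(J^{\mu_t}-J_t\bigr).
\]
By \eqref{eq:mean-drift}, $(M_{t+1})$ is a martingale difference sequence. It is bounded by a constant $K$ on any bounded set of iterates, because Lemma~\ref{lem:geom-estimator} bounds $R_t$. The drift term is a selection of the Filippov field $H(J_t)$ in \eqref{eq:DI}. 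Let $\tau_k=\sum_{s<k}\gamma_s$ and let $\bar J$ be the piecewise-linear interpolation. On a compact neighborhood $U$ of $C$, $\bar J$ is then an asymptotic pseudotrajectory of \eqref{eq:DI} whenever the martingale $W_t=\sum_{s\ge t_0}\gamma_s M_{s+1}$ has small oscillation.

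Fix a compact neighborhood $V\subset U$ of $C$ and an exit margin $\eta>0$, chosen so that the following holds by strict local attraction of $C$ for \eqref{eq:DI} together with upper semicontinuity of $H$ (equivalently, a Lyapunov/attractor-block statement):
\begin{itemize}[leftmargin=*]
\item[(i)] every solution of \eqref{eq:DI} starting in $V$ stays in $U$ for all time and enters an $\varepsilon$-neighborhood of $C$ after time $T_0$;
\item[(ii)] any $\delta$-perturbed solution (Benaïm--Hofbauer--Sorin) on windows of length $T_0$ starting in $V$ re-enters $V$.
\end{itemize}
Concretely, take an attractor block $V$ whose $T_0$-flow image lies in the interior of $V$ with margin $\eta$. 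The perturbed-DI robustness lemma then gives $\delta(\eta)>0$ such that, if $\sup_{s\in[\tau_k,\tau_k+T_0]}\|W(s)-W(\tau_k)\|<\delta$ and $\gamma_t<\delta$, the interpolated path starting in $V$ at $\tau_k$ stays in $U$ and lands in $V$ at $\tau_k+T_0$.

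Iterate this window by window. The probability of the good event is bounded through a stopped martingale: set $\sigma=\inf\{t:J_t\notin U\}$ and consider $W_{t\wedge\sigma}$. By Doob's $L^2$ maximal inequality,
\[
\Prob\Bigl(\sup_{t\ge t_0}\|W_{t\wedge\sigma}\|\ge\delta/2\Bigr)\le \frac{16K^2}{\delta^2}\sum_{s\ge t_0}\gamma_s^2\le \frac{16K^2}{\delta^2(N-1)}.
\]
Choose $N$ large so that this bound is less than $1/2$ and $\gamma_t<\delta$ for all $t$. Start at $J_0\in V$, deterministic and in the local basin, with $t_0=0$. On the complementary event, of probability at least $1/2$, induction over windows gives $\sigma=\infty$ and $J_t\in U$ for all $t$. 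The asymptotic-pseudotrajectory theorem for differential inclusions then gives that the limit set $L$ of $(J_t)$ is internally chain transitive for \eqref{eq:DI} and contained in $U$. Since $C$ is the maximal invariant set in $U$, it follows that $L\subset C$ and $L$ is invariant.

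Nonconvergence follows from this. If $J_t\to J_\infty$, then $L=\{J_\infty\}$ would be an invariant singleton of \eqref{eq:DI} inside $C$, that is, an equilibrium with $0\in H(J_\infty)$. But $C$ is a nonconstant periodic orbit. Along its open segments the vector field $f_\mu$ is nonzero, and at its switching points certificate C3 gives a nonzero normal component for every convex combination. Hence $0\notin H(J)$ for every $J\in C$, a contradiction. So on an event of probability at least $1/2$, $J_t$ does not converge.

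The main obstacle is step (ii), the robustness of the hybrid attraction under perturbation near the switching surfaces. The flow is discontinuous there, so standard Lipschitz Gronwall estimates fail. I would handle it by relying on the DI formulation, whose upper semicontinuity gives closure of perturbed solutions. The transversality margin from C3 keeps every perturbed crossing near $C$ single-crossing, so no sliding or chattering can occur and the certified Poincaré contraction (C4+) is inherited with a slightly worse constant. If the abstract attractor-block lemma is hard to verify directly, I would construct $V$ explicitly as the union of flow-images of the C4+ section box over one period, thickened by $\eta$.
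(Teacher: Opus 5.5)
Your argument is correct in outline but takes a different route from the paper. The probabilistic part matches: the paper uses the same stopped martingale and the same Doob-plus-$\sum_t\gamma_t^2\le 1/(N-1)$ estimate. The deterministic part differs. The paper feeds the small-noise event into a purpose-built result, Theorem~\ref{thm:hybrid-trapping}, whose proof works by hand:
\begin{enumerate}[leftmargin=2em]
\item Gronwall shadowing of the Euler path inside each fixed-policy region;
\item implicit-function stability of the guard-hitting times, using the C3 transversality;
\item a one-cycle return estimate in which the certified C4+ margin $m_\Sigma\ge 3.76\times10^{-4}$ absorbs the $O(\bar\gamma+\eta)$ error;
\item nonconvergence via infinitely many visits to two disjoint open sets $U_a,U_b$.
\end{enumerate}
You instead use only the qualitative attractor statement of Proposition~\ref{prop:certified-cycle} together with the Bena\"im--Hofbauer--Sorin machinery. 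Closure of perturbed solutions of an upper semicontinuous, convex-valued inclusion gives your window-by-window attractor-block induction with no Lipschitz or Gronwall estimate across switching surfaces. The limit-set theorem, together with $0\notin H(J)$ on $C$ (C2 gives exactly two greedy policies at each switching point, and C3 excludes zero from their convex hull), then rules out convergence.

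Your route buys generality. It sidesteps exactly the discontinuity issue that Step~2 of the paper's appendix handles by hand, and it is essentially the Faure--Roth route the paper cites as consistent with its lift. The cost is nonconstructive constants and reliance on abstract theorems. The paper's route ties the constants directly to the certified section radius and margin and never needs the limit-set theorem.

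Two points need tightening.
\begin{itemize}[leftmargin=2em]
\item The asymptotic-pseudotrajectory/ICT step requires the oscillation of $W$ over windows of fixed length to tend to zero, not merely stay small. This does hold almost surely on your event, because the stopped martingale is $L^2$-bounded and therefore converges; say so.
\item The claim that $C$ is the maximal invariant set in $U$ must be arranged, either by choosing $U$ inside the basin of $C$, or by replacing it with the fact that an internally chain transitive set meeting the basin of an attractor lies in the attractor. Here $L\cap V\neq\varnothing$ because the path is in $V$ at every window end.
\end{itemize}
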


The proof is given in Section~\ref{sec:stochastic-lift}.  The result is a statement about the original stochastic recursion, not only about the deterministic mean field.

\section{Certificate summary}\label{sec:certificate-summary}

This section states the certificate structure without the full numerical tables.  All numerical details are preserved in Appendix~\ref{app:certificates}, and the support scripts are described in Appendix~\ref{app:scripts}.

\paragraph{Event equations.}
For $y=(x_0,z_0,\ldots,z_5)$ define
\[
    x_{k+1}=\Phi_{\mu_k}(x_k,z_k),
    \qquad k=0,\ldots,5,
\]
with indices read modulo six.  The exit guards are
\[
    s_0=2,\quad s_1=1,\quad s_2=3,\quad s_3=2,\quad s_4=3,\quad s_5=1.
\]
The event equations are
\[
    d_{s_k}(x_{k+1})=0,
    \qquad k=0,\ldots,5,
\]
together with the return condition $x_6=x_0$.  Equivalently, define $G:\R^9\to\R^9$ by
\[
G(y)=\begin{pmatrix}
 d_2(x_1)\\
 d_1(x_2)\\
 d_3(x_3)\\
 d_2(x_4)\\
 d_3(x_5)\\
 d_1(x_6)\\
 x_6-x_0
\end{pmatrix}.
\]
Because \eqref{eq:Phi} involves only $z,z^9,z^{40}$ and all MDP data are exact rational decimals, $G(y)=0$ is a polynomial system over $\mathbb Q$.

\paragraph{C1: orbit existence.}
A Krawczyk certificate on a radius-$10^{-30}$ box around the printed approximate solution proves that the event system has a unique exact zero in that box.

\paragraph{C2: guard signs.}
A Bernstein-polynomial guard-sign certificate proves that the sign pattern of $(d_1,d_2,d_3)$ on the six open segments is exactly
\[
\begin{array}{c|c|c}
 k&\mu_k&\operatorname{sign}(d_1,d_2,d_3)\\
\hline
0&(0,1,1)&(+,-,-)\\
1&(0,0,1)&(+,+,-)\\
2&(1,0,1)&(-,+,-)\\
3&(1,0,0)&(-,+,+)\\
4&(1,1,0)&(-,-,+)\\
5&(1,1,1)&(-,-,-).
\end{array}
\]
Thus the orbit follows the intended greedy policies and hits the intended exit guards in order.

\paragraph{C3: one-way transversality and no sliding.}
At each switching point, the incoming and outgoing vector fields have normal components with the same nonzero sign.  Therefore every convex combination of the adjacent fields also crosses the switching surface in the same direction.  This rules out local Filippov sliding near the certified orbit.

\paragraph{C4+: positive-radius Poincare contraction.}
The return section is the switching surface
\[
    \Sigma=\{x:d_1(x)=0\}
\]
through the initial point $x_0^*$.  On a section box of radius
\[
    r_\Sigma=10^{-3},
\]
the interval verifier proves
\[
    \sup_{(u,v)\in[-10^{-3},10^{-3}]^2}\|D\Pi_\Sigma(u,v)\|_\infty
    \le 0.623283830439181<1,
\]
and
\[
    \Pi_\Sigma(V_\Sigma(10^{-3}))\subset\interior(V_\Sigma(10^{-3})).
\]
The certified invariant margin is at least
\[
    m_\Sigma\ge 3.76\times 10^{-4}.
\]
The verifier also logs a strict sign-bracketing certificate for every event root on every section subbox: at the two endpoints of the final root interval, the guard polynomial has opposite strict interval signs, while $\partial_r\phi_k$ is bounded away from zero.  Thus each event root exists and is unique throughout the certified section box.  At the same radius, the section verifier certifies the positive-radius itinerary: the incoming and outgoing normal velocities have the same nonzero sign at every crossing, with outgoing lower bound at least $3.23$, and degree-$40$ Bernstein first-hit guard-sign checks prove that the intended guard is the first hit on every segment.  The active exit guard is certified positive on the pre-bracket interval $[R_k^+,1]$, while the non-exit guards are certified on $[R_k^-,1]$ with only the intended entry endpoint zero allowed.  The minimum Bernstein lower bound in these first-hit checks is $9.89\times10^{-10}$, and the minimum signed non-active guard lower bound at the crossing boxes is $1.455\times10^{-1}$.  Consequently the C4+ map is the true oriented local hybrid return map on $V_\Sigma(10^{-3})$.

\section{Stochastic lift}\label{sec:stochastic-lift}

We now prove Theorem~\ref{thm:positive-prob-nonconv}.  The deterministic ingredient is a local hybrid trapping theorem for perturbed Euler paths.  Its detailed proof is given in Appendix~\ref{app:hybrid-proof}; the statement is included here because it is the bridge from the certified mean-field cycle to the stochastic recursion.

\begin{theorem}[Hybrid trapping under small cumulative perturbations]\label{thm:hybrid-trapping}
Let $C$ be a periodic orbit satisfying C1--C4+.  Then there exist a compact tube $U$ around $C$, two disjoint open sets $U_a,U_b\subset U$ intersecting two separated segments of $C$, and constants
\[
    \eta>0,
    \qquad
    \bar\gamma>0,
\]
with the following property.  Let $Y_t$ satisfy
\begin{equation}\label{eq:perturbed-euler}
    Y_{t+1}=Y_t+\gamma_t\bigl(h(Y_t)+\xi_{t+1}\bigr),
\end{equation}
where $h(J)=D(J^{\mu(J)}-J)$ away from ties and the policy choice agrees with the unique greedy policy whenever $Y_t$ is not on a tie surface.  Assume
\[
    0<\gamma_t\le\bar\gamma,
    \qquad
    \gamma_t\to0,
    \qquad
    \sum_t\gamma_t=\infty,
\]
and
\begin{equation}\label{eq:small-cumulative-perturbation}
    \sup_{m\ge0}\left\|\sum_{t=0}^{m}\gamma_t\xi_{t+1}\right\|_\infty\le\eta.
\end{equation}
If $Y_0$ is chosen in a sufficiently small post-crossing neighborhood of the section $V$, then the entire path remains in $U$ and visits both $U_a$ and $U_b$ infinitely often.  In particular, $Y_t$ does not converge.
\end{theorem}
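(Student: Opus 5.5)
The plan is to reduce the discrete perturbed Euler path to the deterministic hybrid flow one period at a time and to use the C4+ contraction to absorb the errors that accumulate over each lap. First I fix the geometry. By C2 and C3 there are a tube $U$ around $C$ and a radius $\rho>0$ with three properties. Inside $U$, each piece of $U$ near the $k$th segment lies in the closed region where $\mu_k$ is the unique greedy policy, apart from the two guards bounding it. The normal velocities $\nabla d_{s_k}\cdot f_{\mu_{k}}$ and $\nabla d_{s_k}\cdot f_{\mu_{k+1}}$ have a common sign and are bounded away from zero by some $c_0>0$ in a $\rho$-neighbourhood of each crossing point. The non-active guards keep their sign with margin at least $c_1>0$ there. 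These bounds follow from the certified lower bounds ($3.23$ and $1.455\times10^{-1}$) and continuity. In particular $h$ is piecewise affine on $U$, with Lipschitz constant $L$ on each piece, and bounded by some $M$.

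The key lemma is a one-lap shadowing estimate. Let $Y_{t_0}\in U$ lie within $\delta$ of the section point $\Pi$-box (post-crossing), and let the perturbation increments over a time window of length about $2T_C$ (in the clock $\tau_t=\sum_{s<t}\gamma_s$) satisfy $\|\sum_{s=t_0}^{m}\gamma_s\xi_{s+1}\|_\infty\le 2\eta$. Here $T_C$ is the period. Suppose also $\gamma_t\le\bar\gamma$. Then the interpolated path stays within $K(\delta'+\eta+\bar\gamma)$ of the hybrid flow from the corresponding point, and returns to a post-crossing neighbourhood of $\Sigma$. On each smooth segment this is the standard discrete Gronwall bound. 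The Euler path and the exact flow use the same affine field $f_{\mu_k}$, so the error obeys $e_{t+1}=(I-\gamma_tD)e_t+\gamma_t\xi_{t+1}+O(\gamma_t^2M)$, and a summation by parts gives $|e|\le e^{LT}(|e_{t_0}|+2\eta+C\bar\gamma)$. At a crossing, transversality with speed at least $c_0$ means the discrete path enters the next region after a time displacement of order (error)$/c_0$. It then cannot re-cross, because the normal component keeps increasing at rate at least $c_0/2$ while the perturbation contributes at most $2\eta$. It cannot hit a non-active guard, because of the margin $c_1$. The time-mismatch adds an error $O(M\cdot\text{error}/c_0)$, which is still linear in $\delta+\eta+\bar\gamma$.

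Next I compose with the Poincar\'e map. Let $\hat Y$ denote the first post-crossing sample after the path passes $\Sigma$, and project it onto the section via the local flow chart. The lemma gives $\|P(\hat Y_{\text{next}})-\Pi_\Sigma(P(\hat Y))\|\le K'(\eta+\bar\gamma)$. Since $\|D\Pi_\Sigma\|\le\lambda:=0.6233<1$ and the image has margin $m_\Sigma$, I choose $\eta,\bar\gamma$ with $K'(\eta+\bar\gamma)<m_\Sigma$. Then the box $V_\Sigma(10^{-3})$ is mapped into itself by the perturbed return. Induction over laps then keeps the path in $U$ forever. This is where the hypothesis $\sup_m\|\sum_{t\le m}\gamma_t\xi_{t+1}\|\le\eta$ enters: any window sum is a difference of two partial sums, so it is at most $2\eta$. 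Also $\sum\gamma_t=\infty$ guarantees that each lap is completed in finite steps, since the elapsed $\tau$ grows without bound. Finally, I choose $U_a,U_b$ as small open balls around two points of $C$ on separated segments, say segments $0$ and $3$, at distance much larger than the tube width. Every lap passes through both, so the path visits each infinitely often and cannot converge.

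I expect the main obstacle to be the crossing step. A discrete path with noise near a switching surface could chatter, i.e.\ cross and recross. Ruling this out needs the one-way normal sign from C3, applied to the \emph{convex combinations} of the two fields. It also needs a careful argument that the cumulative noise bound $2\eta$ is small compared with the product of $c_0$ and the dwell time. I would handle this by tracking the scalar $d_{s_k}(Y_t)$ directly: it is affine in $Y$, its increments are $\gamma_t(\nabla d\cdot h+\nabla d\cdot\xi_{t+1})$, and summing shows that it is monotone up to an additive $2\|\nabla d\|\eta$.
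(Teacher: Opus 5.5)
Your proposal is correct and follows essentially the same route as the paper: segmentwise discrete-Gronwall shadowing, stability of transversal guard hitting from the C2/C3 margins, a one-lap return estimate absorbed by the C4+ invariant margin once $\eta,\bar\gamma$ are small (window sums bounded by $2\eta$), and infinitely many laps from $\sum_t\gamma_t=\infty$. Your direct tracking of the affine quantity $d_{s_k}(Y_t)$ is a somewhat more explicit version of the paper's no-chattering step. It is monotone up to an additive $2\|\nabla d_{s_k}\|_1\eta$ because both adjacent fields have the same normal sign, so it actually shows that brief noise-driven re-crossings can occur but cost only $O(\eta/c_0)$ of ODE time. Your earlier claim that the path ``cannot re-cross'' should therefore be read in that weaker sense.
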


\begin{proof}[Proof of Theorem~\ref{thm:positive-prob-nonconv}]
Write the recursion as
\[
    J_{t+1}=J_t+\gamma_t(h(J_t)+M_{t+1}),
\]
where
\[
    h(J_t)=D(J^{\mu_t}-J_t)
\]
and
\[
    M_{t+1}=e_{I_t}\bigl(R_t-J_t(I_t)\bigr)-D(J^{\mu_t}-J_t).
\]
By the conditional independence assumptions and \eqref{eq:mean-drift},
\[
    \E[M_{t+1}\mid\mathcal F_t]=0.
\]
Let $U$ and $\eta$ be supplied by Theorem~\ref{thm:hybrid-trapping}, and define the exit time
\[
    \tau_U=\inf\{t\ge0:J_t\notin U\}.
\]
Since $U$ is compact, $\|J\|_\infty$ is bounded on $U$.  By Lemma~\ref{lem:geom-estimator}, $|R_t|\le G_{\max}/(1-\alpha)$ almost surely.  Also $\|D(J^{\mu_t}-J_t)\|_\infty$ is bounded on $U$, because there are finitely many policies and each $J^\mu$ is finite.  Hence there exists $B<\infty$ such that, for all $t<\tau_U$,
\begin{equation}\label{eq:bounded-martingale-increments}
    \|M_{t+1}\|_\infty\le B.
\end{equation}
Define the stopped martingale
\[
    S_m=\sum_{t=0}^{m}\gamma_tM_{t+1}\one_{\{t<\tau_U\}}.
\]
For each coordinate $i$, Doob's maximal inequality and \eqref{eq:bounded-martingale-increments} imply
\[
    \Prob\left(\sup_{m\ge0}|S_m(i)|>\eta\right)
    \le \frac{B^2}{\eta^2}\sum_{t=0}^{\infty}\gamma_t^2.
\]
A union bound over the three coordinates yields
\[
    \Prob\left(\sup_{m\ge0}\|S_m\|_\infty>\eta\right)
    \le \frac{3B^2}{\eta^2}\sum_{t=0}^{\infty}\gamma_t^2.
\]
For $\gamma_t=(t+N)^{-1}$,
\[
    \sum_{t=0}^{\infty}\gamma_t^2\le \frac{1}{N-1}.
\]
Choose $N$ large enough so that
\[
    \frac{3B^2}{\eta^2(N-1)}<\frac12
\]
and also so that $\gamma_0\le\bar\gamma$.  Then
\[
    \Prob\left(\sup_{m\ge0}\|S_m\|_\infty\le\eta\right)>\frac12.
\]
On this positive-probability event, the cumulative perturbation condition \eqref{eq:small-cumulative-perturbation} holds for the stopped process.  Apply Theorem~\ref{thm:hybrid-trapping} to the stopped recursion.  If $\tau_U<\infty$, then the stopped and original recursions agree up to the first exit time, while the trapping theorem keeps the stopped path inside $U$ for all times, including at $\tau_U$; this contradicts the definition of $\tau_U$.  Hence $\tau_U=\infty$.  Therefore the original recursion remains in $U$ and visits the two disjoint open sets $U_a$ and $U_b$ infinitely often.  It cannot converge.
\end{proof}

\begin{remark}[Stepsizes in finite numerical illustrations]\label{rem:illustration-stepsizes}
The proof uses $\gamma_t=1/(t+N)$ because it gives a clean Robbins--Monro martingale estimate.  This choice is mathematically valid but poor for short simulations.  The ODE-time accumulated by iteration $m$ is approximately
\[
    \sum_{t=0}^{m}\frac{1}{t+N}\approx \log\frac{m+N}{N}.
\]
Since the certified period is about $33.18$, observing one full period with a large $N$ would require roughly
\[
    m\approx N(e^{33.18}-1),
\]
which is astronomically large.  Numerical illustrations should therefore use a slower square-summable decay such as $\gamma_t=c(t+N)^{-0.55}$, which still satisfies $\sum_t\gamma_t=\infty$ and $\sum_t\gamma_t^2<\infty$.
\end{remark}

\section{Optimal value, mechanism, and separation from the cycle}\label{sec:optimality-mechanism}

For $\mu^*=(0,1,1)$, exact computation gives
\[
    J^{\mu^*}=(-85.8269548750,-92.9772951300,-93.9064262591).
\]
At this point,
\[
    d(J^{\mu^*})=(6.3364556585,-13.6208389862,-5.4965724948).
\]
Therefore $\mu^*$ is greedy with respect to its own value.  Hence
\[
    T_{\mu^*}J^{\mu^*}=TJ^{\mu^*}=J^{\mu^*}.
\]
The discounted Bellman fixed point is unique, so $J^{\mu^*}=J^*$.  The certified orbit contains
\[
    x_0\approx(-8.5936554911,-0.0178290153,0.4941993842),
\]
and therefore
\[
    \|x_0-J^*\|_\infty>90.
\]
Thus the nonconvergent behavior is not convergence to the optimal value.

The difference between uniform and nonuniform sampling is also visible in residual variables.  If sampling is uniform, then $D=cI$.  Inside a fixed-policy region,
\[
    \dot J=c(J^\mu-J),
\]
so
\[
    J(t)=J^\mu+e^{-ct}(J(0)-J^\mu),
\]
a scalar straight-line relaxation.  In residual variables $X=T_\mu J-J$,
\[
    J^\mu-J=(I-\alpha P_\mu)^{-1}X,
\]
and therefore
\[
    \dot X=-(I-\alpha P_\mu)D(I-\alpha P_\mu)^{-1}X.
\]
If $D=cI$, this reduces to
\[
    \dot X=-cX.
\]
For nonuniform $D$, the matrix
\[
    (I-\alpha P_\mu)D(I-\alpha P_\mu)^{-1}
\]
need not preserve the order structure of the residual.  This is the mechanism that makes the certified switched cycle possible.

\section{Discussion and future directions}\label{sec:discussion}

The counterexample resolves the scalar-stepsize nonuniform question negatively, but it does not give a complete classification of when nonuniform asynchronous Monte Carlo OPI converges or fails.  The natural next problem is to decide, from the MDP and update distribution, whether the diagonal distortion $D$ can generate a stable switched cycle or whether the residual dynamics retain enough monotonicity to force convergence.

\paragraph{A sufficient condition for mildly nonuniform sampling.}
Let
\[
    \sigma=\frac{\max_i q_i}{\min_i q_i}
\]
be the condition number of the update distribution.  A Tsitsiklis-style residual argument can be adapted when
\begin{equation}\label{eq:mild-nonuniform-condition}
    \sigma<\frac{1}{\alpha}.
\end{equation}
The proof sketch is as follows.  Let $X_t=TJ_t-J_t$ and let $c_t=\|X_t\|_\infty$.  When $\Gamma_t=\gamma_tD$, the nonuniform update matrix no longer commutes with $P_{\mu_t}$.  The residual recursion acquires a commutator term of the form
\[
    R_t=\alpha\Gamma_t^{-1}[P_{\mu_t},\Gamma_t](J_t-J^{\mu_t}).
\]
Since
\[
    \left|\frac{q_j}{q_i}-1\right|\le \sigma-1
\]
and the usual residual estimate gives
\[
    \|J_t-J^{\mu_t}\|_\infty\le \frac{c_t}{1-\alpha},
\]
one obtains
\[
    |R_t(i)|\le \beta c_t,
    \qquad
    \beta=\frac{\alpha(\sigma-1)}{1-\alpha}.
\]
Condition \eqref{eq:mild-nonuniform-condition} is exactly $\beta<1$.  The residual recursion then has the same qualitative form as in Tsitsiklis's proof, except for a contraction term $\beta\|X_t\|_\infty\one$ and a martingale noise term.  This yields $\limsup_t X_t\le0$ and the remaining convergence argument follows the uniform case.

This condition is very restrictive when $\alpha$ is close to one.  In the counterexample, $\alpha=0.95$ and $q=(0.02,0.18,0.80)$, so $\sigma=40$, far outside the sufficient range $\sigma<1/\alpha\approx1.0526$.  The condition therefore explains a safe mildly nonuniform regime but does not characterize the sharp boundary.

\paragraph{Open problems.}
Several questions remain open.
\begin{enumerate}[leftmargin=2em]
\item Is there a useful necessary-and-sufficient condition, or a computable sufficient condition, that rules out attracting policy-switching cycles for a given pair $(P,g,D)$?
\item Can the sufficient condition \eqref{eq:mild-nonuniform-condition} be substantially weakened by using matrix measures, weighted norms, or policy-dependent Lyapunov functions instead of the uniform sup-norm residual estimate?
\item Are component-dependent stepsizes, action-uniform update schemes, or normalization by empirical update frequencies enough to recover convergence in all discounted tabular settings, or do further pathologies remain?
\item Can the certified-counterexample search be automated to map the transition from convergence to nonconvergence as the skewness of $q$ varies?
\item Does an analogous diagonal-distortion mechanism appear in action-value MC-O-PI, approximate policy iteration, or policy-gradient algorithms with nonuniform sampling?
\end{enumerate}

The present paper should therefore be read as a certified obstruction result rather than a complete theory of nonuniform OPI.  It identifies a failure mode that any general convergence theorem for nonuniform asynchronous OPI must avoid.

\appendix

\section{Counterexample data}\label{app:data}

The discount factor is
\[
    \alpha=0.95=\frac{19}{20}.
\]
All decimals below are interpreted as exact rational numbers.  The costs are
\[
    g=\begin{pmatrix}
    2.8645365128 & 6.2604856687\\
    8.8765458733 & -6.6644477357\\
    -4.4428868652 & -7.5034828348
    \end{pmatrix}.
\]
The transition rows are
\[
\begin{aligned}
P(1,0)&=(0.0083421983,\;0.5161392971,\;0.4755185046),\\
P(1,1)&=(0.4115295055,\;0.3414919270,\;0.2469785675),\\
P(2,0)&=(0.0176113658,\;0.9549690854,\;0.0274195488),\\
P(2,1)&=(0.3574261643,\;0.1754143753,\;0.4671594604),\\
P(3,0)&=(0.6791060489,\;0.0358694514,\;0.2850244997),\\
P(3,1)&=(0.3221209528,\;0.3803472570,\;0.2975317902).
\end{aligned}
\]
Each row sums exactly to one.

The approximate initial point and event parameters are
\[
\bar x_0=\begin{pmatrix}
-8.59365549110427684060547352373810356279806687\\
-0.0178290152994757047995861123714012318312941336\\
0.494199384159623810384986644320237043033461336
\end{pmatrix}
\]
and
\[
\begin{aligned}
\bar z_0&=0.747518894957396298068037965192410779744158632,\\
\bar z_1&=0.983925376878883085999540632135004996246384770,\\
\bar z_2&=0.978860491445977974417005488978870175248391989,\\
\bar z_3&=0.957246782943564021435154886545707794901486695,\\
\bar z_4&=0.748005437128185097605270137358697189429422619,\\
\bar z_5&=0.999089891635080634085860840337534558610691943.
\end{aligned}
\]

\section{Detailed finite certificates}\label{app:certificates}

\subsection{C1: Krawczyk certificate for the orbit}\label{app:C1}

Let $\bar y=(\bar x_0,\bar z_0,\ldots,\bar z_5)$ and let
\[
    \mathcal B=\bar y+[-10^{-30},10^{-30}]^9.
\]
The verifier \path{verify_c1_c3_certificates.py} parses every decimal datum as an exact rational number.  It evaluates $G(\bar y)$ and $DG(\bar y)$ exactly over $\mathbb Q$, constructs a fixed 80-digit decimal-rational preconditioner $A$ for the Krawczyk map, and interval-encloses $DG(\mathcal B)$ by forward automatic differentiation with outward-rounded interval arithmetic.

The Krawczyk operator used by the verifier is
\[
    K(\bar y,\mathcal B)=\bar y-AG(\bar y)+(I-ADG(\mathcal B))(\mathcal B-\bar y).
\]
The certificate log prints
\[
    \|G(\bar y)\|_\infty
    \le 7.074240613034390477\times10^{-43},
\]
\[
    \|A\|_\infty
    \le 12.28569267773200443,
\]
\[
    \|I-ADG(\mathcal B)\|_\infty
    \le 2.303711148043807579\times10^{-25},
\]
and therefore
\[
\begin{aligned}
\operatorname{rad}K(\bar y,\mathcal B)
&\le \|A\|_\infty\|G(\bar y)\|_\infty
    +\|I-ADG(\mathcal B)\|_\infty10^{-30}\\
&\le 8.691194610007328100\times10^{-42}<10^{-30}.
\end{aligned}
\]
Consequently,
\[
    K(\bar y,\mathcal B)\subset\interior(\mathcal B).
\]
The Krawczyk theorem gives a unique exact zero $y^*\in\mathcal B$ of $G$.

\subsection{C2: guard-sign certificate}\label{app:C2}

For segment $k$, write
\[
    J(r)=\Phi_{\mu_k}(x_k,r),\qquad r\in[z_k,1].
\]
Each guard polynomial has the form
\[
    d_i(J(r))=a+br+cr^9+er^{40}.
\]
Set
\[
\sigma_{k,i}=\begin{cases}
+1,&\mu_k(i)=0,\\
-1,&\mu_k(i)=1.
\end{cases}
\]
The required condition is
\[
    \sigma_{k,i}d_i(J(r))\ge0
\]
throughout the segment.  Substitute
\[
    r=z_k+(1-z_k)u,
    \qquad u\in[0,1],
\]
and express the signed polynomial in degree-$40$ Bernstein form:
\[
    \sigma_{k,i}d_i(J(z_k+(1-z_k)u))
    =\sum_{m=0}^{40}b_m^{(k,i)}\binom{40}{m}u^m(1-u)^{40-m}.
\]
Because the Bernstein basis is nonnegative on $[0,1]$, interval lower bounds $b_m^{(k,i)}\ge0$ certify nonnegativity of the signed guard on the whole segment.  The verifier checks these inequalities over the full box $\mathcal B$.  The only zero coefficients are the intended endpoint zeros forced by $G(y^*)=0$.  All other coefficients have the following certified lower bounds:
\[
\begin{array}{c|c|c|c}
 k&\mu_k&\text{guard}&\text{lower bound for non-endpoint Bernstein coefficients}\\
\hline
0&(0,1,1)&d_1&5.86\\
0&(0,1,1)&d_2&0.22\\
0&(0,1,1)&d_3&0.14\\
1&(0,0,1)&d_1&0.51\\
1&(0,0,1)&d_2&0.85\\
1&(0,0,1)&d_3&11.74\\
2&(1,0,1)&d_1&0.68\\
2&(1,0,1)&d_2&20.53\\
2&(1,0,1)&d_3&0.26\\
3&(1,0,0)&d_1&19.77\\
3&(1,0,0)&d_2&0.66\\
3&(1,0,0)&d_3&0.51\\
4&(1,1,0)&d_1&0.46\\
4&(1,1,0)&d_2&10.53\\
4&(1,1,0)&d_3&0.13\\
5&(1,1,1)&d_1&0.011\\
5&(1,1,1)&d_2&17.60\\
5&(1,1,1)&d_3&0.0036.
\end{array}
\]
The same verifier prints the global certified lower bound
\[
    \min_{k,i,m\text{ non-endpoint}}b_m^{(k,i)}
    \ge 3.675551493546090391\times10^{-3}.
\]

\subsection{C3: one-way transversality and no Filippov sliding}\label{app:C3}

Let
\[
    n_k=\nabla d_{s_k}(x_{k+1}),
    \qquad
    f_k=f_{\mu_k},
    \qquad
    f_{k+1}=f_{\mu_{k+1}},
\]
where indices are read modulo six.  The notation $f_{k+1}(x_{k+1})$ means the outgoing policy vector field evaluated at the same switching point $x_{k+1}$; it does not mean evaluation at $x_{k+2}$.  Incoming transversality requires
\[
    n_k^\top f_k(x_{k+1})\ne0.
\]
For the differential inclusion \eqref{eq:DI}, we also require the outgoing vector field to cross the same switching surface in the same direction:
\[
    n_k^\top f_{k+1}(x_{k+1})\ne0,
    \qquad
    \sgn(n_k^\top f_k(x_{k+1}))=\sgn(n_k^\top f_{k+1}(x_{k+1})).
\]
The verifier gives the following interval enclosures:
\[
\begin{array}{c|c|c|c}
 k&\text{guard}&n_k^\top f_k(x_{k+1})&n_k^\top f_{k+1}(x_{k+1})\\
\hline
0&d_2=0&[0.5300,0.5301]&[42.6155,42.6156]\\
1&d_1=0&[-25.0820,-25.0819]&[-25.9101,-25.9100]\\
2&d_3=0&[9.6649,9.6651]&[9.5711,9.5713]\\
3&d_2=0&[-11.8302,-11.8300]&[-33.4407,-33.4406]\\
4&d_3=0&[-0.3103,-0.3101]&[-3.2309,-3.2308]\\
5&d_1=0&[10.0605,10.0607]&[18.5810,18.5811].
\end{array}
\]
Thus, in a sufficiently small neighborhood of each switching point, every vector in the convex hull of the adjacent fields has normal component bounded away from zero with the same sign.  Hence the Filippov inclusion has no sliding branch on the switching surface near the certified orbit; all solutions cross the surface one-way.

\subsection{C4+: positive-radius Poincare-section contraction certificate}\label{app:C4}

The initial and final section is the switching surface
\[
    \Sigma=\{x:d_1(x)=0\}
\]
through $x_0^*$.  This hyperplane is crossed twice during one geometric period: at $x_0^*$ the time-oriented crossing is from $d_1<0$ to $d_1>0$ and the outgoing policy is $\mu_0$, while the later crossing at $x_2^*$ has the opposite orientation and a different policy phase.  The map $\Pi_\Sigma$ below is the oriented return map that starts on $\Sigma$ with the outgoing branch $\mu_0$ and follows the six-event itinerary; consequently its first return from $x_0^*$ is the full-period point $x_6^*=x_0^*$, not the intermediate oppositely oriented crossing.  Let
\[
    n_0=\nabla d_1=\alpha(P(1,1,\cdot)-P(1,0,\cdot)).
\]
Numerically,
\[
    n_0=(0.38302794184,-0.165915001595,-0.217112940245),
\]
so $n_{0,1}\ne0$.  We parametrize $\Sigma$ by
\[
    x_2=x_{0,2}^*+u,
    \qquad
    x_3=x_{0,3}^*+v,
\]
\[
    x_1=x_{0,1}^*-
    \frac{n_{0,2}u+n_{0,3}v}{n_{0,1}}.
\]
Equivalently,
\[
    x=x_0^*+B\binom{u}{v},
\]
where
\[
    B=\begin{pmatrix}
    -n_{0,2}/n_{0,1} & -n_{0,3}/n_{0,1}\\
    1&0\\
    0&1
    \end{pmatrix}
    \approx
    \begin{pmatrix}
    0.4331668358&0.5668331642\\
    1&0\\
    0&1
    \end{pmatrix}.
\]
The certified section box is
\[
    V_\Sigma(r_\Sigma)
    =\{x_0^*+B(u,v)^\top:|u|\le r_\Sigma,\ |v|\le r_\Sigma\},
    \qquad r_\Sigma=10^{-3}.
\]
The verifier uses outward-rounded interval arithmetic with 100 decimal digits.  The section box $[-10^{-3},10^{-3}]^2$ is subdivided into a $4\times4$ grid.  On each subbox, the verifier isolates the six event roots by interval Newton iteration applied to
\[
    \phi_k(x,r)=d_{s_k}(\Phi_{\mu_k}(x,r)),
    \qquad
    \Phi_{\mu_k}(x,r)=J^{\mu_k}+(x-J^{\mu_k})\odot(r,r^9,r^{40}).
\]
For fixed $x$, the guard polynomial has the form
\[
    \phi_k(x,r)=a(x)+b(x)r+c(x)r^9+e(x)r^{40}.
\]
For every section subbox and every segment $k$, the verifier checks strict sign-bracketing at the two endpoints of the certified root interval $R_k=[\ell_k,u_k]$,
\[
    \phi_k(X_k,\ell_k)<0<\phi_k(X_k,u_k)
    \quad\text{or}\quad
    \phi_k(X_k,u_k)<0<\phi_k(X_k,\ell_k),
\]
together with
\[
    0\notin \partial_r\phi_k(X_k,R_k)
\]
and
\[
    0\notin n_k^\top f_{\mu_k}(Y_k),
    \qquad
    Y_k=\Phi_{\mu_k}(X_k,R_k),
\]
so each event root exists uniquely in $R_k$ for every point in the section subbox, and the event derivative is well-defined throughout the certified box.  The verifier also evaluates the outgoing field at the same crossing box and checks
\[
    0\notin n_k^\top f_{\mu_{k+1}}(Y_k),
    \qquad
    \operatorname{sgn}\bigl(n_k^\top f_{\mu_k}(Y_k)\bigr)
    =
    \operatorname{sgn}\bigl(n_k^\top f_{\mu_{k+1}}(Y_k)\bigr),
\]
with indices read modulo six.  Hence the no-sliding and same-side crossing condition holds on the full certified section tube, not only on the radius-$10^{-30}$ orbit box.

Finally, the same section verifier certifies the first-hit guard order on each segment.  Write the certified root bracket as $R_k=[R_k^-,R_k^+]$.  Since the true exit root lies inside this bracket, the verifier checks the active exit guard on the pre-bracket interval $r\in[R_k^+,1]$ with no excluded Bernstein coefficient.  It checks the non-exit guards on the larger interval $r\in[R_k^-,1]$, allowing only the intended entry guard's endpoint zero at $r=1$.  All these degree-$40$ Bernstein lower bounds are strictly positive, with global minimum $9.89\times10^{-10}$.  Thus no non-intended guard can be hit before the prescribed exit bracket anywhere in $V_\Sigma(10^{-3})$, and the non-active guards also have the required signs at the crossing boxes.  The event derivative is enclosed by
\[
D\Psi_k(x)=
\left[
I-\frac{f_{\mu_k}(y)n_k^\top}{n_k^\top f_{\mu_k}(y)}
\right]
\diag(r,r^9,r^{40}),
\qquad y=\Phi_{\mu_k}(x,r).
\]
The six event derivatives are multiplied and converted to section coordinates:
\[
    D\Pi_\Sigma(u,v)=S\,D\Pi(x_0^*+B(u,v)^\top)B,
\]
where
\[
    S=\begin{pmatrix}0&1&0\\0&0&1\end{pmatrix}
\]
selects the output coordinates $(x_2-x_{0,2}^*,x_3-x_{0,3}^*)$.

The interval verifier prints the following global certificate over all 16 section subboxes:
\[
\begin{array}{c|c}
\text{quantity}&\text{certified value/bound}\\
\hline
\text{section radius }r_\Sigma&10^{-3}\\
\text{center section residual }\|\Pi_\Sigma(0,0)\|_\infty&7.08\times10^{-43}\\
\text{maximum event-root interval width}&2.01\times10^{-5}\\
\text{strict endpoint bracketing for every event}&\text{true}\\
\min|\partial_r\phi_k|&20.73\\
\min|n_k^\top f_{\mu_k}(Y_k)|&0.310\\
\min|n_k^\top f_{\mu_{k+1}}(Y_k)|&3.2308\\
\text{incoming/outgoing same nonzero sign on the section box}&\text{true}\\
\text{first-hit guard order certified on the section box}&\text{true}\\
\text{minimum first-hit Bernstein lower bound}&9.89\times10^{-10}\\
\text{minimum crossing non-active signed-guard lower bound}&0.1455\\
\text{row 1 absolute row-sum of }D\Pi_\Sigma&0.623283830439181\\
\text{row 2 absolute row-sum of }D\Pi_\Sigma&0.439294078059216.
\end{array}
\]
Consequently,
\[
    \sup_{(u,v)\in[-10^{-3},10^{-3}]^2}
    \|D\Pi_\Sigma(u,v)\|_\infty
    \le 0.623283830439181<1.
\]
The mean-value theorem in section coordinates gives, for every $(u,v)\in[-10^{-3},10^{-3}]^2$,
\[
\begin{aligned}
    \|\Pi_\Sigma(u,v)\|_\infty
    &\le\|\Pi_\Sigma(0,0)\|_\infty
    +0.623283830439181\|(u,v)\|_\infty\\
    &\le 7.08\times10^{-43}+0.623283830439181\cdot10^{-3}<10^{-3}.
\end{aligned}
\]
Thus
\[
    \Pi_\Sigma(V_\Sigma(10^{-3}))\subset\interior(V_\Sigma(10^{-3})),
\]
and the return map is a contraction on the section box with certified Lipschitz constant
\[
    \kappa_\Sigma=0.623283830439181<1.
\]
The resulting positive invariant margin is at least
\[
    m_\Sigma
    =10^{-3}-\left(7.08\times10^{-43}+0.623283830439181\cdot10^{-3}\right)
    >3.76\times10^{-4}.
\]

\section{Proof of the hybrid trapping theorem}\label{app:hybrid-proof}

\begin{proof}[Proof of Theorem~\ref{thm:hybrid-trapping}]
We give the proof in four steps.

\medskip
\noindent\textbf{Step 1: finite-time shadowing inside one policy region.}
Fix a compact neighborhood $K$ of $C$ small enough that, except in disjoint neighborhoods of the six switching points, the active policy is unique and equal to the intended one.  On each such region,
\[
    h(J)=f_k(J)=D(J^{\mu_k}-J)
\]
is affine.  Hence there is $L<\infty$ such that
\[
    \|f_k(x)-f_k(y)\|_\infty\le L\|x-y\|_\infty
\]
for all relevant $x,y\in K$ and all $k$.

Let $\varphi_k(t,x)$ denote the exact flow of $\dot x=f_k(x)$.  Consider a perturbed Euler path that remains in the same region during the ODE-time interval $[0,T]$, where $T$ is bounded by a constant larger than the longest certified segment time.  Define ODE time
\[
    s_m=\sum_{t=t_0}^{m-1}\gamma_t.
\]
Standard discrete Gronwall gives, uniformly for $s_m\le T$,
\begin{equation}\label{eq:finite-shadow}
    \|Y_m-\varphi_k(s_m,Y_{t_0})\|_\infty
    \le C_1\left(\max_{t_0\le t<m}\gamma_t+
    \sup_{t_0\le r<m}\left\|\sum_{t=t_0}^{r}\gamma_t\xi_{t+1}\right\|_\infty\right).
\end{equation}
Here $C_1$ depends only on $K,L,T$ and the bound on $\|f_k\|$ on $K$.  The $\max\gamma_t$ term is the Euler truncation error; the second term is the cumulative additive perturbation.  Inequality \eqref{eq:finite-shadow} is obtained by subtracting the Euler scheme from the exact variation-of-constants formula and applying the discrete Gronwall inequality.

\medskip
\noindent\textbf{Step 2: stability of guard hitting.}
For the nominal $k$th segment, let the exit guard be $d_{s_k}=0$ and let the nominal exit time be $\tau_k$.  Define
\[
    F_k(t,x)=d_{s_k}(\varphi_k(t,x)).
\]
At the certified hit point,
\[
    \partial_tF_k(\tau_k,x_k^*)
    =\nabla d_{s_k}(x_{k+1}^*)^\top f_k(x_{k+1}^*)\ne0.
\]
By C3 this derivative is bounded away from zero.  Hence the implicit function theorem gives a $C^1$ hitting-time map $\theta_k(x)$ near $x_k^*$ such that
\[
    F_k(\theta_k(x),x)=0.
\]
Moreover there is $C_2<\infty$ such that
\begin{equation}\label{eq:hitting-lip}
    |\theta_k(x)-\theta_k(x')|+
    \|\varphi_k(\theta_k(x),x)-\varphi_k(\theta_k(x'),x')\|_\infty
    \le C_2\|x-x'\|_\infty.
\end{equation}
The guard-sign certificate C2 gives positive margins to all nonactive guards near the orbit, so after shrinking $K$ the intended guard is the first guard hit.  For the concrete section box used in Step~3, the C4+ section verifier makes this quantitative on the whole box $V_\Sigma(10^{-3})$: on every section subbox and every segment, the active exit guard has a positive Bernstein lower bound on the pre-bracket interval $[R_k^+,1]$, the non-exit guards have positive Bernstein lower bounds on $[R_k^-,1]$ after excluding only the intended entry endpoint zero, the global first-hit Bernstein lower bound is at least $9.89\times10^{-10}$, the non-active guard signs at crossing boxes have lower bound at least $0.1455$, and the incoming/outgoing normal velocities have the same nonzero signs.  Hence the scale used in the contraction estimate is also a scale on which the prescribed itinerary is certified, and the map $\Pi$ used below agrees with the true oriented hybrid return map on the full certified section box, not merely on an unspecified smaller neighborhood.

For the discrete path, \eqref{eq:finite-shadow} and \eqref{eq:hitting-lip} imply that the first sign change of $d_{s_k}$ occurs at a point within
\begin{equation}\label{eq:event-error}
    C_3\left(\max\gamma_t+
    \sup\left\|\sum\gamma_t\xi_{t+1}\right\|_\infty\right)
\end{equation}
of the exact event map.  The one-way transversality in C3 ensures that crossing the surface by a small amount places the iterate on the correct outgoing side; the Filippov convexification cannot create sliding because all adjacent normal velocities have the same nonzero sign.  In the numerical certificate, this same-sign incoming/outgoing transversality is also checked on the full section tube, with $\min |n_k^\top f_{\mu_{k+1}}(Y_k)|\ge 3.2308$.

\medskip
\noindent\textbf{Step 3: one-cycle return estimate.}
Let $\Psi_k$ be the exact event map from the $k$th section to the next one, and let
\[
    \Pi=\Psi_5\circ\cdots\circ\Psi_0
\]
be the exact Poincare map.  By applying Step 2 through six consecutive segments, there is $C_4<\infty$ such that a discrete perturbed circuit starting from a point $x\in V$ returns to the initial section at a point $\widehat\Pi(x)$ satisfying
\begin{equation}\label{eq:return-error}
    \|\widehat\Pi(x)-\Pi(x)\|_\infty
    \le C_4\left(\max\gamma_t+
    \sup\left\|\sum\gamma_t\xi_{t+1}\right\|_\infty\right),
\end{equation}
where the maximum and supremum are taken over the steps in that circuit.

C4+ is certified on the section box
\[
    V=V_\Sigma(r_\Sigma),\qquad r_\Sigma=10^{-3},
\]
and includes first-hit guard-sign and same-sign outgoing-transversality checks on that same box.  Therefore $\Pi$ is the true oriented six-event hybrid return map on $V$.  It has contraction constant
\[
    \kappa_\Sigma=0.623283830439181<1
\]
and invariant margin
\[
    m_\Sigma=\dist_\infty(\Pi(V),\partial V)\ge3.76\times10^{-4}.
\]
Choose $\eta>0$ and $\bar\gamma>0$ so small that
\begin{equation}\label{eq:invariant-box-margin}
    C_4(\bar\gamma+2\eta)<\frac12m_\Sigma.
\end{equation}
Then every perturbed return from $V$ remains in $V$.  The contraction and the positive margin absorb the $O(\bar\gamma+\eta)$ error at every cycle.

\medskip
\noindent\textbf{Step 4: infinite cycling and nonconvergence.}
Because $\sum_t\gamma_t=\infty$, the perturbed path accumulates enough ODE time to complete infinitely many circuits, provided it remains in the tube.  Step 3 proves that it does remain in the tube and returns to the section infinitely often.  Choose two disjoint open sets $U_a$ and $U_b$ around two separated points of $C$, small enough that every admissible perturbed circuit passes through both.  This is possible because the nominal orbit is nonconstant and the tracking error is bounded by the chosen tube radius.  Therefore $Y_t$ visits $U_a$ and $U_b$ infinitely often.  Since $U_a\cap U_b=\varnothing$, the sequence $Y_t$ cannot converge.
\end{proof}

\section{Verification artifact and code availability}\label{app:scripts}

The verification scripts, input data, and certificate logs are available at
\begin{center}
	\url{https://github.com/ylcoldplayer/non-uniform-opi}
\end{center}
The repository contains three Python verifiers ---
\path{verify_c1_c3_certificates.py}, \path{verify_c4plus_decimal_interval.py},
and \path{verify_c4plus_section_interval.py} --- together with the generated
\path{.log} and \path{.json} certificate files.  Every MDP and orbit datum is
parsed as an exact rational decimal; the algebra is evaluated exactly over
$\mathbb Q$ whenever the expressions are algebraic, and every enclosure uses
outward-rounded interval arithmetic.  The global \path{Decimal} precision is set
to $100$ digits and all center--radius boxes are constructed with directed
rounding, so running the scripts regenerates the numerical bounds quoted in the
text.

\paragraph{C1--C3 verifier.}
The script \path{verify_c1_c3_certificates.py} verifies C1--C3 and writes
\path{c1_c3_certificate.log} and \path{c1_c3_certificate.json}.  It imports the
exact-rational MDP data and interval-arithmetic primitives from
\path{verify_c4plus_decimal_interval.py} (described below).  It evaluates
$G(\bar y)$ and $DG(\bar y)$ exactly over $\mathbb Q$, forms a fixed
decimal-rational preconditioner $A$, and encloses $DG(\mathcal B)$ by
outward-rounded interval automatic differentiation.  The same script prints the
Krawczyk inclusion bounds for C1, the degree-$40$ Bernstein coefficient lower
bounds and their global minimum for C2, and the two-sided transversality
intervals for C3.

\paragraph{C4+ verifiers.}
The positive-radius C4+ certificate is produced by two files.  The first,
\path{verify_c4plus_decimal_interval.py}, is the core module that the C1--C3
verifier and the section verifier both import: it defines the exact-rational MDP
data, the policy-value solver, the directed-rounded \path{Decimal} interval
arithmetic, the fixed-policy flows, the guard functions, the event derivatives,
the interval-Newton root isolation, and the strict endpoint-bracketing checks.
Run on its own, it also emits the auxiliary ambient-box certificate
\path{c4plus_interval_certificate.log} and \path{c4plus_interval_certificate.json}
on the box $\|x-x_0^*\|_\infty\le10^{-8}$; this ambient certificate is diagnostic
only.

The proof-relevant certificate is produced by
\path{verify_c4plus_section_interval.py}, which verifies the positive-radius
Poincare-section contraction and writes
\path{c4plus_section_interval_certificate.log} and
\path{c4plus_section_interval_certificate.json}.  It prints the section radius
$r_\Sigma=10^{-3}$, the section parametrization, the subdivision grid, the
interval event-root enclosures, strict endpoint sign-bracketing certificates for
those roots, the incoming and outgoing event-normal denominator checks, the
same-sign no-sliding checks on the section tube, the Bernstein first-hit
guard-sign checks, the non-active guard signs at crossings, and the
section-coordinate contraction bound
\[
\sup_{(u,v)\in[-r_\Sigma,r_\Sigma]^2}\|D\Pi_\Sigma(u,v)\|_\infty
\le \kappa_\Sigma=0.623283830439181<1,
\]
together with the containment
\[
\Pi_\Sigma(V_\Sigma(r_\Sigma))\subset\interior V_\Sigma(r_\Sigma)
\]
at invariant margin $m_\Sigma\ge3.76\times10^{-4}$.  The same log also reports the
positive-radius itinerary margins $\min |n_k^\top f_{\mu_{k+1}}(Y_k)|\ge3.2308$,
the minimum first-hit Bernstein lower bound $9.89\times10^{-10}$, and the minimum
crossing non-active guard lower bound $1.455\times10^{-1}$.

\paragraph{What is certified and what is analytic.}
The finite certificates establish Proposition~\ref{prop:certified-cycle}: the
mean field has a nonconstant attracting hybrid cycle, and the certified pair
$(r_\Sigma,m_\Sigma)$ supplies the section radius and invariant margin used in the
trapping theorem.  The stochastic lift is analytic.  In particular,
Theorem~\ref{thm:hybrid-trapping} and Theorem~\ref{thm:positive-prob-nonconv} use
standard finite-time shadowing, implicit-function stability of guard hitting, and
martingale maximal inequalities.  The constants in the hybrid perturbation
theorem, such as the one-cycle amplification constant $C_4$ in
\eqref{eq:return-error}, are shown to be finite but are not numerically
evaluated; the proof only requires that $\bar\gamma$ and $\eta$ be chosen small
enough to satisfy \eqref{eq:invariant-box-margin}.

\bibliographystyle{plain}
\bibliography{references}

@inproceedings{Baird1995,
  title = {{Residual Algorithms: Reinforcement Learning with Function Approximation}},
  author = {Baird, Leemon},
  booktitle = {{Machine Learning Proceedings 1995}},
  pages = {30--37},
  year = {1995},
  organization = {Elsevier}
}

@article{BenaimHofbauerSorin2005,
  title = {{Stochastic Approximations and Differential Inclusions}},
  author = {Bena{\"i}m, Michel and Hofbauer, Josef and Sorin, Sylvain},
  journal = {{SIAM Journal on Control and Optimization}},
  volume = {44},
  number = {1},
  pages = {328--348},
  year = {2005},
  publisher = {SIAM}
}

@article{Borkar1998,
  title = {{Asynchronous Stochastic Approximations}},
  author = {Borkar, Vivek S.},
  journal = {{SIAM Journal on Control and Optimization}},
  volume = {36},
  number = {3},
  pages = {840--851},
  year = {1998},
  publisher = {SIAM}
}

@article{Chen2018,
  title = {{On the Convergence of Optimistic Policy Iteration for Stochastic Shortest Path Problem}},
  author = {Chen, Yuanlong},
  journal = {arXiv preprint arXiv:1808.08763},
  year = {2018}
}

@article{DelattreFournier2026,
  title = {{Markov Decision Processes: On the Convergence of the Monte-Carlo First-Visit Algorithm}},
  author = {Delattre, Sylvain and Fournier, Nicolas},
  journal = {{SIAM Journal on Control and Optimization}},
  volume = {64},
  number = {3},
  pages = {1672--1697},
  year = {2026},
  publisher = {SIAM}
}

@article{FaureRoth2010,
  title = {{Stochastic Approximations of Set-Valued Dynamical Systems: Convergence with Positive Probability to an Attractor}},
  author = {Faure, Mathieu and Roth, Gr{\'e}gory},
  journal = {{Mathematics of Operations Research}},
  volume = {35},
  number = {3},
  pages = {624--640},
  year = {2010},
  publisher = {INFORMS}
}

@article{Liu2021,
  title = {{On the Convergence of Reinforcement Learning with Monte Carlo Exploring Starts}},
  author = {Liu, Jun},
  journal = {Automatica},
  volume = {129},
  pages = {109693},
  year = {2021},
  publisher = {Elsevier}
}

@article{LubarsWinnickiLivesaySrikant2021,
  title = {{Optimistic Policy Iteration for MDPs with Acyclic Transient State Structure}},
  author = {Lubars, Joseph and Winnicki, Anna and Livesay, Michael and Srikant, R.},
  journal = {arXiv preprint arXiv:2102.00030},
  year = {2021}
}

@article{OliviersVinnicombe2026,
  title = {{Convergence of Monte Carlo Optimistic Policy Iteration: Beyond Uniform State-Action Updates}},
  author = {Oliviers, Octave and Vinnicombe, Glenn},
  journal = {arXiv preprint arXiv:2606.10580},
  year = {2026}
}

@article{PerkinsLeslie2012,
  title = {{Asynchronous Stochastic Approximation with Differential Inclusions}},
  author = {Perkins, Steven and Leslie, David S.},
  journal = {{Stochastic Systems}},
  volume = {2},
  number = {2},
  pages = {409--446},
  year = {2012},
  publisher = {INFORMS}
}

@book{SuttonBarto2018,
  title = {{Reinforcement Learning: An Introduction}},
  author = {Sutton, Richard S. and Barto, Andrew G.},
  edition = {Second},
  year = {2018},
  publisher = {MIT Press}
}

@article{Tsitsiklis2002,
  title = {{On the Convergence of Optimistic Policy Iteration}},
  author = {Tsitsiklis, John N.},
  journal = {{Journal of Machine Learning Research}},
  volume = {3},
  number = {Jul},
  pages = {59--72},
  year = {2002}
}

@inproceedings{Wagner2011,
  title = {{A Reinterpretation of the Policy Oscillation Phenomenon in Approximate Policy Iteration}},
  author = {Wagner, Paul},
  booktitle = {{Advances in Neural Information Processing Systems}},
  volume = {24},
  pages = {2573--2581},
  year = {2011}
}

@inproceedings{Wagner2013,
  title = {{Optimistic Policy Iteration and Natural Actor-Critic: A Unifying View and a Non-Optimality Result}},
  author = {Wagner, Paul},
  booktitle = {{Advances in Neural Information Processing Systems}},
  volume = {26},
  pages = {1592--1600},
  year = {2013}
}

@inproceedings{WangYuanShaoRoss2022,
  title = {{On the Convergence of the Monte Carlo Exploring Starts Algorithm for Reinforcement Learning}},
  author = {Wang, Che and Yuan, Shuhan and Shao, Kai and Ross, Keith W.},
  booktitle = {{International Conference on Learning Representations}},
  year = {2022}
}

\end{document}